\DeclareMathOperator*{\argmin}{arg\,min}
\newtheorem{theorem}{Theorem}[section]
\title{
Efficient Probabilistic Collision Detection for Non-Convex Shapes
}
\author{Jae Sung Park, Chonhyon Park, Dinesh Manocha
\thanks{Jae Sung Park and Chonhyon Park and Dinesh Manocha are with the Department of Computer Science, University of North Carolina at Chapel Hill. E-mail: {\tt\small \{ jaesungp, chpark, dm\}@cs.unc.edu}.}
}
\begin{document}

\maketitle
\thispagestyle{empty}
\pagestyle{empty}

\begin{abstract}
We present new algorithms to perform fast probabilistic collision queries between convex as well as non-convex objects. Our approach is applicable to general shapes, where
one or more objects are represented using Gaussian  probability distributions. We present a fast new algorithm for a pair of convex objects, and extend the approach to non-convex models using hierarchical representations. We highlight the performance of our algorithms with various convex and non-convex shapes on complex synthetic benchmarks and trajectory planning benchmarks for a 7-DOF Fetch robot arm.  
\end{abstract}


\section{Introduction}

Collision detection is an important problem in many applications, including physics-based simulation and robotics.
In robot motion planning, collision detection is regarded as one of the major bottlenecks.
There is extensive work on faster collision checking for convex shapes, hierarchical algorithms, and methods for deformable models~\cite{gilbert1988fast,Lin03collisionand,teschner2005collision}.
These prior collision detection techniques assume an 
exact representation of the geometric objects. They perform exact interference tests between the primitives and return the overlapping features.

In many applications, including robotics, virtual environments, and dynamic simulation, exact representations of the primitives are not easily available. Rather, the object representations are described using probability distribution functions. This may occur because the environment data is captured using sensors and only partial observations are therefore available. Furthermore, the primitives captured or extracted using sensors tend to be noisy.  In this case, the goal is to compute the {\em collision probability} of two or more objects when one or more object representations (e.g. positions, orientations, etc.) are represented in terms of probability distributions~\cite{rusu2009real,bae2009closed,pan2011probabilistic}. 
In many robotics applications, such probabilistic collision queries are performed on the imperfect representations due to the uncertainties.
For example, the planning of robot motions in dynamic real-world environments has to compute safe robot trajectories that avoid collisions with moving obstacles.
In this case, the future obstacle positions are not known exactly and typically predicted using probability distributions. In other cases, the environment is typically represented using point clouds with some error distribution.

\begin{figure}[ht]
  \centering
  \subfloat[][]
  {
    \includegraphics[width=\linewidth]{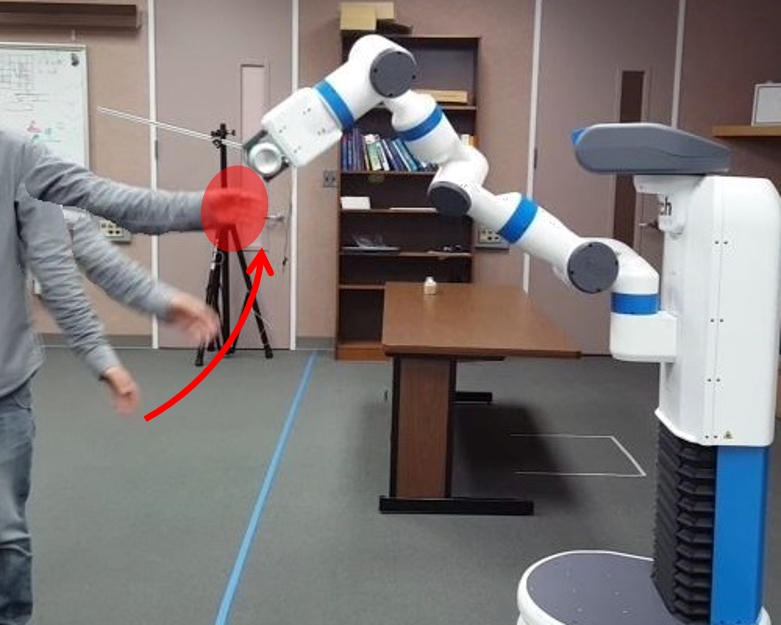}
  }
  \caption{{\bf Trajectory planning for robot avoiding human arm  using  our probabilistic collision algorithm:} The moving human arm is tracked using a point cloud sensor and the positional error is represented as a Gaussian probability distribution. The $95\%$ boundary of Gaussian distribution is represented by the red circle. We use our novel probabilistic algorithm to guarantee that the collision probability is less than $5\%$ at any state during the given trajectory. In order to handle such non-convex objects and compute tight bounds for probabilistic collision detection, we found that a hierarchy of OBBs (oriented bounding boxes)  provides the best results in terms of running time and probabilistic bounds.
 }
  \label{fig:robot_trajectory}
\end{figure}

Typically, these probability distributions are approximated
using Gaussian distributions and the collision computations are performed using  probability distribution functions (PDF). The resulting probabilistic collision
detection for objects with Gaussian distributions is defined
using a confidence level, and probability can be evaluated using numerical
integration or stochastic techniques. However, accurate techniques based on Monte Carlo integration are  too slow for realtime applications. Current algorithms for probabilistic computations are based on circular or spherical approximations ~\cite{du2011probabilistic,park2016fast,pan2013real}, but they tend to be rather conservative  (in terms of bounds) for general non-convex shapes.

\noindent {\bf Main Results:} 
In this paper, we present a fast probabilistic collision detection algorithm for general non-convex models. Our approach is applicable to any models represented in terms of inexact polygons or noisy point cloud data in which we are given two general shapes and the error is represented using Gaussian probability distributions. 
We compute hierarchical representations of non-convex models using simpler bounding volumes and present a novel hierarchical probabilistic collision detection algorithm. Moreover, we present fast and reliable algorithms  to perform probabilistic collision detection on simple shapes such as AABBs (axis-aligned bounding boxes), OBBs (oriented-bounding boxes), k-DOPs (k discretely oriented polytopes), and convex polytopes.
We have evaluated the performance of our probabilistic collision detection on many synthetic and real-world benchmarks (captured using the Kinect). Moreover, we have integrated them with trajectory planning, and highlight the performance for real-time motion prediction and planning for a 7-DOF Fetch robot. We also evaluate them on complex synthetic benchmarks  and observe that the hierarchies of OBBs or k-DPS provide the best balance between tight bounds and running times.

The rest of the paper is organized as follows.
We survey prior work on exact and probabilistic collision detection algorithms in Section~\ref{sec:related}.
Section~\ref{sec:overview} gives an overview of the problem of probabilistic collision detection, and we present our algorithm for convex and non-convex polygonal models in Section~\ref{sec:pcd}. We analyze these algorithms and highlight their performance on complex benchmarks in Section~\ref{sec:result}.
\section{Related Work}
\label{sec:related}

In this section, we give a brief overview of prior work on exact and probabilistic collision detection algorithms.

\subsection{Collision Detection using Bounding Volume Hierarchies}

There is extensive work on exact collision detection algorithms for geometric models represented using triangles or higher order primitives. These include efficient algorithms for convex polytopes~\cite{gilbert1988fast,lin1991fast} and general algorithms for non-convex shapes using bounding volume hierarchies~\cite{Lin03collisionand,ericson2004real}.
Many techniques have been proposed to improve the efficiency of collision detection using Bounding Volume Hierarchies (BVHs).
Some of the commonly-used hierarchies are based on bounding volumes such as spheres~\cite{hubbard1993interactive,he2014efficient} or axis-aligned bounding boxes (AABBs)~\cite{bergen1997efficient}.
Other algorithms that use tighter-fitting bounding volumes include oriented-bounding boxes (OBBs)~\cite{gottschalk1996obbtree}, discrete oriented polytopes (k-DOPs)~\cite{klosowski1998efficient}, or their hybrid combinations~\cite{sanna2004cdfast}.

There is a trade-off between different bounding volume algorithms.
The simple bounding volume algorithms have lower overhead in terms of overlap test, but can result in a high number of false positives and exact tests between the primitives. 
Tighter-fitting bounding volume algorithms may involve more complex overlap tests, but the tighter bounds reduce the number of false positives. The relative performance of different BVH-based algorithhms varies based on the complexity and shape of geometric models and their relative configurations.

\subsection{Probabilistic Collision Detection}

A common approach to checking for collisions between noisy geometric datasets is to perform exact collision checking using enlarged volumes that enclose the original object primitives~\cite{van2012lqg,Park:2012:ICAPS}. 
Many approaches handling point cloud sensor data convert the data  into a set of boxes and check for collisions between the boxes and a robot~\cite{rusu2009real}.
Other approaches generally enlarge the object bounding volumes to compute a new bounding volume for a given confidence level. This may correspond to a sphere~\cite{bry2011rapidly} or a ‘Sigma hull’~\cite{lee2013sigma}.
However, the computed volume overestimates the collision probability.

Other approaches have been proposed to perform probabilistic collision detection on point cloud data.
Bae et al.~\cite{bae2009closed} presented a closed-form expression for the positional uncertainty of point clouds.
Pan et al.~\cite{pan2011probabilistic} reformulated the probabilistic collision detection problem as a classification problem and computed per point collision probability. However, these approaches assume that the environment is mostly static.
For realtime collision detection, probabilistic collision detection is performed using broad phase data structures that handle large point clouds~\cite{pan2013real}.

Some approaches approximate the collision probability using numerical integrations or stochastic techniques~\cite{blackmore2006probabilistic,lambert2008fast}, which require a large number of sample evaluations to compute an accurate probability. 
Guibas et al.~\cite{guibas2010bounded} evaluate collision probability bounds using numerical integrations in multiple resolutions, which can avoid unnecessary numerical integrations in the high-resolution.
The collision probability between objects with Gaussian distributions can be approximated using the probability at a single configuration, which corresponds to the mean~\cite{du2011probabilistic} or the maximum~\cite{park2016fast} of the probability distribution function (PDF).

\section{Overview}
\label{sec:overview}

In this section, we define the symbols and notation used in the rest of the paper and give the problem statement of probabilistic collision detection.

\subsection{Symbols and Notation}

We use upper case symbols, like $A$ and $B$, to denote 3D input primitives. 
We use boldface letters, such as $\mathbf{p}$ or $\mathbf{x}$, to represent vectors.
The probability that an \textit{event} can occur is denoted as $P(\textit{event})$.
We represent Gaussian distribution as $\mathcal{N}(\mathbf{p}, \Sigma)$, where $\mathbf{p}$ is a mean vector and $\Sigma$ is a covariance matrix.
The probability density function of the Gaussian distribution $\mathcal{N}(\mathbf{p}, \mathbf{\Sigma})$ is represented as $p(\mathbf{x}, \mathbf{p}, \mathbf{\Sigma})$.


\subsection{Background}
Probabilistic collision detection algorithms are used to perform collision checking between two or more objects when the objects are not represented exactly and some of the input information such as positions or orientations of polygons or point clouds are given as probability distributions. 
The output could be a binary answer corresponding to whether or not those objects overlap. In another case, it could be a probability value that corresponds to the probability that an overlap occurs for the given input. This probability value is useful in many applications where a binary value is not enough to evaluate the given input.

In many applications, probability distributions are approximated using Gaussian distributions, which allow efficient computation using probability distribution functions (PDF). However, Gaussian distributions have non-zero probabilities in the entire problem space, which always result in non-zero collision probabilities. Therefore, the probabilistic collision detection for objects with Gaussian distributions is defined using a \emph{confidence level}, which is a threshold probability that determines the binary output of the collision detection (active collision or not) from the computed collision probability (e.g. 0.99).

In general, the exact collision probability is not computable even for a simple collision scenario between circles with a Gaussian distribution (see Sec.~\ref{subsec:cpa}). The probability can be approximated using a numerical integration or a stochastic technique such as the Monte Carlo method~\cite{blackmore2006probabilistic,lambert2008fast}, which requires a large number of sample evaluations to compute an accurate probability.

Therefore, efficient approximation methods have been proposed to compute the collision probability with tight bounds, without the evaluation of a large number of samples~\cite{du2011probabilistic,park2016fast}, for a limited type of bounding volumes such as circles or spheres. In this paper, we show that our novel algorithms based on OBBs or k-DOPs provide better solution

\subsection{Problem Statement}
We assume that the two input primitives, $A$ and $B$, are each represented as a polygon or triangle mesh. We also assume that one Gaussian distribution (PDF) is available for each object.
In this paper, we consider the cases when $A$ and $B$ are both convex polytopes, or general  non-convex shapes. If the input is given as 3D point cloud data with some error distribution, we pre-compute a mesh representation and the appropriate Gaussian distribution from the point cloud data.

A positional displacement vector ${\bf \epsilon}$ is applied as translational operator to the volume $A$, which is denoted as $A + \mathbf{\epsilon} = \lbrace \mathbf{a} + {\bf \epsilon} | \mathbf{a} \in A \rbrace$. We assume a Gaussian distribution assumption on $\epsilon$ with zero mean and covariance matrix $\mathbf{\Sigma}$. We define the output collision probability as
\begin{align}
P \left( (A + \mathbf{\epsilon}) \cap B \neq \varnothing \right) \label{eq:pcd_definition} \\
\mathbf{\epsilon} \sim \mathcal{N}(\mathbf{p}, \mathbf{\Sigma}) . \nonumber
\end{align}
The collision event is equivalent to
\begin{align*}
& (A + \mathbf{\epsilon}) \cap B \neq \varnothing \\
\iff & \exists \mathbf{a} \in A, \mathbf{b} \in B \quad s.t. \quad \mathbf{a} + \mathbf{\epsilon} = \mathbf{b} \\
\iff & \exists \mathbf{a} \in A, \mathbf{b} \in B \quad s.t. \quad \mathbf{\epsilon} = - \mathbf{a} + \mathbf{b} \\
\iff & \exists \mathbf{x} \in (-A) \bigoplus B \quad s.t. \quad \mathbf{\epsilon} = \mathbf{x} \\
\iff & \mathbf{\epsilon} \in (-A) \bigoplus B .
\end{align*}
Then, the probability is formulated as
\begin{align}
& P \left( \mathbf{\epsilon} \in (-A) \bigoplus B \right) \\
=& \iiint I \left( \mathbf{x} \in (-A) \bigoplus B \right) p(\mathbf{x}, \mathbf{0}, \mathbf{\Sigma}) d \mathbf{x} \\
=& \iiint_{V} p(\mathbf{x}, \mathbf{0}, \mathbf{\Sigma}) d \mathbf{x} ,
\label{eq:colobj}
\end{align}
where the function $I(\mathbf x)$ and the obstacle function $p(\mathbf x,\mathbf p,\mathbf \Sigma)$ are defined as,
\begin{align}
\label{eq:colobj1}
I(\mathbf x)=\left\{\begin{matrix}
1 & \textrm{if }\mathbf x \textrm{ is true}\\ 
0 & \textrm{otherwise}
\end{matrix}\right. \, \textrm{and}
\end{align}
\begin{align}
\label{eq:colobj2}
p(\mathbf x,\mathbf p,\mathbf \Sigma)=\frac{e^{-0.5(\mathbf x-\mathbf p)^T\mathbf \Sigma^{-1}(\mathbf x-\mathbf p)}}{\sqrt{(2\pi)^3\|\mathbf \Sigma\|}},
\end{align}
respectively.

\subsection{Collision Probability Approximation}
\label{subsec:cpa}


For a sphere $A$ of radius $r_2$ with $\epsilon \sim \mathcal{N} (\mathbf p, \mathbf \Sigma)$ and a sphere $B$ with radius $r_1$, the exact probability of collision between them is given as the integration of $\epsilon$ in $V$, where $V$ is a sphere or radius $r_1+r_2$.
It is known that there is no closed form solution for the integral given in (\ref{eq:colobj}), even for spheres.

Du Toit and Burdick~\cite{du2011probabilistic} approximate (\ref{eq:colobj}) as 
\begin{align}
P \left( \mathbf{\epsilon} \in ((-A) \bigoplus B) \right) \approx& \iiint_{V} 1 d \mathbf{x} \cdot p(\mathbf{x}, \mathbf{0}, \mathbf{\Sigma})\\
=& \frac{4\pi}{3}(r_1+r_2)^3 \cdot p(\mathbf{x}, \mathbf{0}, \mathbf{\Sigma}).
\end{align}
However, this approximated probability can be either smaller or larger than the exact probability, and it is hard to give any guarantees in terms of lower or upper bound on the computed probability. 

Park et al.~\cite{park2016fast} compute $\mathbf x_{max}$, the position that has the maximum probability of  $\mathcal{N} (\mathbf p, \mathbf \Sigma)$ in $V$, and compute the upper bound of (\ref{eq:colobj}) as 
\begin{align}
\label{eq:approx}
P \left( \mathbf{\epsilon} \in ((-A) \bigoplus B) \right) \approx \frac{4\pi}{3}(r_1+r_2)^3 \cdot p(\mathbf{x_{max}}, \mathbf{0}, \mathbf{\Sigma}),
\end{align}
where $\mathbf x_{max}$ is computed as a solution of 
\begin{align}
\mathbf x_{max} = \argmin_{\mathbf x} \left\{ (\mathbf x-\mathbf p)^T \mathbf \Sigma^{-1}(\mathbf x - \mathbf p)+\lambda \mathbf x^2\right\}
\end{align}
with a Lagrange multiplier $\lambda$, using a one-dimensional numerical search. This approach guarantees that the approximation never underestimates the collision probability, but is limited to isotropic objects such as circles or spheres.

\section{Probabilistic Collision Detection}
\label{sec:pcd}

In this section, we present a fast algorithm for probabilistic collision detection between convex polytopes. Furthermore, we extend to non-convex models using convex decomposition and bounding volume hierarchies.

\subsection{Probabilistic Collision Detection for Convex Polyhedrons}
\label{subsec:convex_pcd}

We extend the probabilistic collision detection for any two 3D convex polyhedrons $A$ and $B$.
We transform the volume $V$ in (\ref{eq:colobj}) by $(\mathbf{\Sigma})^{-\frac{1}{2}}$ to normalize the Gaussian distribution, i.e.,
\begin{align}
& \iiint_{V'} p(\mathbf{x}, \mathbf{0}, \mathbf{I}) d \mathbf{x} \nonumber \\
=& \iiint_{V'} \frac{1}{\sqrt{8 \pi^3}} \exp \left( -\frac{1}{2} \| \mathbf{x} \|^2 \right) d \mathbf{x}, \label{eq:exact_prob}
\end{align}
where $V' = (\mathbf{\Sigma})^{-\frac{1}{2}} V$.

\begin{figure}[ht]
  \centering
  \subfloat[][]
  {
    \includegraphics[width=0.48\linewidth]{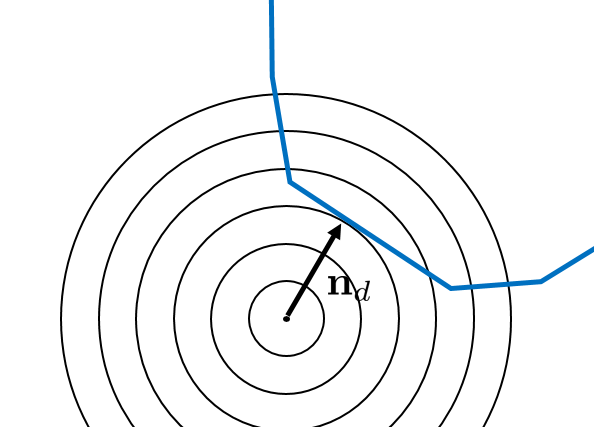}
  }
  \subfloat[][]
  {
    \includegraphics[width=0.48\linewidth]{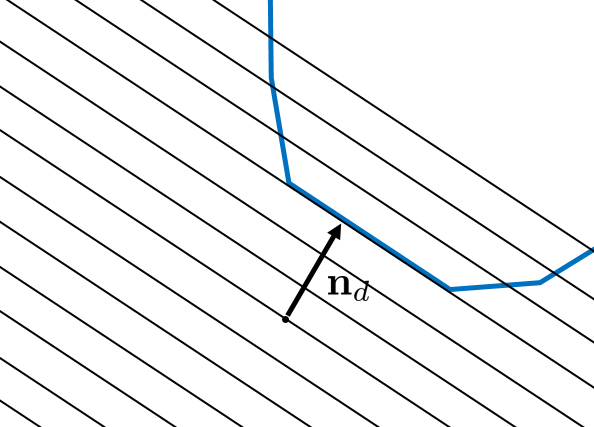}
  }
  \caption{(a) Contour plots of the bivariate Gaussian distribution, whose covariance matrix is normalized to an identity matrix. The minimum displacement vector $\mathbf{n}_d$ from the center to the blue polygon is computed using the GJK algorithm. (b) Contour plots of upperly bounded function $\mathbf{F}$ whose contours are perpendicular to $\mathbf{n}_d$ This function behaves like 1D Gaussian distribution.}
  \label{fig:upper_bound_pdf}
\end{figure}

Evaluating the integration of Gaussian distribution over volume $V'$ is still difficult, but a tight upper bound of the integral can be efficiently computed by replacing the probability distribution function with another one, which makes the integration computation easier. To define another function for computing the upper bound, first we compute the minimum displacement vector $\mathbf{d}$ between convex shapes $A'$ and $B'$ which are transformed from $A$ and $B$, respectively, by $(\mathbf{\Sigma})^{-\frac{1}{2}}$. $\mathbf{d}$ is computed efficiently by the GJK algorithm~\cite{gilbert1988fast}. Let $\mathbf{n}_d$ be the unit directional vector. Then, by the Cauchy-Schwarz inequality,
\begin{align*}
(\mathbf{x} \cdot \mathbf{n}_d)^2 \leq \| \mathbf{x} \|^2.
\end{align*}
Thus,
\begin{align}
& \iiint_{V'} \frac{1}{\sqrt{8 \pi^3}} \exp \left( -\frac{1}{2} \| \mathbf{x} \|^2 \right) d \mathbf{x} \nonumber \\
\leq & \iiint_{V'} \frac{1}{\sqrt{8 \pi^3}} \exp \left( -\frac{1}{2} (\mathbf{x} \cdot \mathbf{n}_d)^2 \right) d \mathbf{x} . \label{eq:pdf_upper}
\end{align}
The benefit of replacing $\| \mathbf{x} \|^2$ with $(\mathbf{x} \cdot \mathbf{n}_d)^2$ is that the latter term, the projection of $\mathbf{x}$ to the direction of $\mathbf{n}_d$, behaves like a 1D parameter instead of 3D, as shown in Fig.~\ref{fig:upper_bound_pdf}.

The divergence theorem is used to compute the upper bound on collision probability (\ref{eq:pdf_upper}).
\begin{align*}
\iiint_{V'} \text{div} (\mathbf{F}) dV = \oiint_{S'} (\mathbf{F} \cdot \mathbf{n}_S) dS ,
\end{align*}
where $\mathbf{F}$ is a vector field, $S'$ is the surface of $V'$, $dS$ is an infinitesimal area for integration, and $\mathbf{n}_S$ is the normal vector of $dS$. We want the divergence of $\mathbf{F}$ to be equal to the function inside the integral in (\ref{eq:pdf_upper}). Let's define $\mathbf{F}$ as
\begin{align*}
\mathbf{F}(\mathbf{x}) = \frac{1}{2 \pi} \left( 1 + \text{erf} \left( \frac{\mathbf{x} \cdot \mathbf{n}_d}{\sqrt{2}} \right) \right) \mathbf{n}_d ,
\end{align*}
where $\text{erf()}$ is the Gaussian error function. The directional derivative of $\mathbf{F}(\mathbf{x})$ along any directional vector orthogonal to $\mathbf{n}_d$ is zero because $\mathbf{F}$ varies only along $\mathbf{n}_d$. The divergence of $\mathbf{F}$ thus becomes $(\partial \mathbf{F} / \partial \mathbf{n}_d)$, and this is equal to the function in (\ref{eq:pdf_upper}).

The right-hand side of the divergence theorem makes the integration efficient for a convex polyhedron $S'$. It is formulated as
\begin{align*}
\sum_{i} \oiint_{\triangle S_{i1} S_{i2} S_{i3}} (\mathbf{F} \cdot \mathbf{n}_i) dS ,
\end{align*}
where $\mathbf{n}_i$ is the normal vector of the $i$-th triangle $\triangle S_{i1} S_{i2} S_{i3}$. Because the error function integral over a triangle domain is another hard problem, the upper bound on the integral is evaluated instead, that is
\begin{align}
& \oiint_{\triangle S_{i1} S_{i2} S_{i3}} (\mathbf{F} \cdot \mathbf{n}_i) dS \nonumber \\
\leq & \sum_i \left( \max_{j=1,2,3} \mathbf{F}(S_{ij}) \cdot \mathbf{n}_i \right) \text{Area}(\triangle S_{i1} S_{i2} S_{i3}) . \label{eq:upper_bound}
\end{align}
Summing up the values for all triangles gives a tight upper bound of collision probability, as defined in (\ref{eq:pcd_definition}).

This gives higher value than expected collision probability, as compared to that computed using Monte Carlo methods. Monte Carlo methods take $n_{MC}$ samples, $\epsilon_i$, from position error $\mathcal{N}(\mathbf{0}, \Sigma)$.
\begin{theorem}
\label{thm:monte_carlo}
{\em As the number of Monte Carlo increases to the infinity, the approximated probability by Monte Carlo method is upperly bounded by Equation (\ref{eq:upper_bound}).}
\end{theorem}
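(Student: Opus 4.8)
The plan is to combine a law-of-large-numbers argument for the Monte Carlo estimator with the chain of bounds already developed in Section~\ref{subsec:convex_pcd}. Write the Monte Carlo estimate as $\hat{P}_{n_{MC}} = \frac{1}{n_{MC}} \sum_{i=1}^{n_{MC}} I\!\left( \epsilon_i \in (-A) \bigoplus B \right)$, where the $\epsilon_i$ are i.i.d.\ draws from $\mathcal{N}(\mathbf{0}, \mathbf{\Sigma})$. Each summand is a Bernoulli random variable whose expectation is exactly the collision probability $P\!\left( \epsilon \in (-A) \bigoplus B \right)$ of (\ref{eq:colobj}), so by the law of large numbers $\hat{P}_{n_{MC}} \to P\!\left( \epsilon \in (-A) \bigoplus B \right)$ almost surely as $n_{MC} \to \infty$. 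It therefore suffices to show that this exact probability is bounded above by the quantity in (\ref{eq:upper_bound}), after which chaining the two facts yields the claim.

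For that direction I would verify that every replacement made in Section~\ref{subsec:convex_pcd} moves in the upper-bounding direction. First, the change of variables $\mathbf{x} \mapsto \mathbf{\Sigma}^{-1/2}\mathbf{x}$ is an exact identity: its Jacobian $\sqrt{\|\mathbf{\Sigma}\|}$ cancels the matching factor in the normalizing constant of $p(\mathbf{x}, \mathbf{0}, \mathbf{\Sigma})$, and it maps the integration region $V = (-A) \bigoplus B$ to the convex polytope $V' = \mathbf{\Sigma}^{-1/2} V$ (a Minkowski sum of convex polytopes is a convex polytope, and an invertible linear image of a polytope is again a polytope), so (\ref{eq:colobj}) equals (\ref{eq:exact_prob}). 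Second, Cauchy--Schwarz gives $(\mathbf{x} \cdot \mathbf{n}_d)^2 \le \|\mathbf{x}\|^2 \|\mathbf{n}_d\|^2 = \|\mathbf{x}\|^2$, and since $e^{-s/2}$ is decreasing in $s$ this yields the pointwise inequality of integrands, hence (\ref{eq:pdf_upper}). Third, the vector field $\mathbf{F}$ is smooth ($\text{erf}$ is entire) and, by construction, $\text{div}(\mathbf{F})$ equals the integrand of (\ref{eq:pdf_upper}); since $V'$ is a bounded polytope with piecewise-linear boundary, the divergence theorem applies and the flux integral decomposes as a finite sum over the triangulated boundary faces.

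The one step needing an actual argument rather than a reference to the earlier text is the per-triangle estimate (\ref{eq:upper_bound}). Here I would observe that on the triangle $\triangle S_{i1} S_{i2} S_{i3}$ the scalar integrand $\mathbf{F}(\mathbf{x}) \cdot \mathbf{n}_i = \frac{1}{2\pi}\left( 1 + \text{erf}\!\left( \frac{\mathbf{x} \cdot \mathbf{n}_d}{\sqrt{2}} \right) \right)(\mathbf{n}_d \cdot \mathbf{n}_i)$ is a monotone scalar function (increasing, decreasing, or constant according to the sign of $\mathbf{n}_d \cdot \mathbf{n}_i$) composed with the affine map $\mathbf{x} \mapsto \mathbf{x} \cdot \mathbf{n}_d$; an affine function on a simplex attains its extrema at the vertices, and a monotone function preserves this, so $\sup_{\mathbf{x} \in \triangle S_{i1} S_{i2} S_{i3}} \mathbf{F}(\mathbf{x}) \cdot \mathbf{n}_i = \max_{j=1,2,3} \mathbf{F}(S_{ij}) \cdot \mathbf{n}_i$. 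Bounding the integrand by this constant and integrating over the face gives (\ref{eq:upper_bound}) term by term; summing over faces and chaining with the preceding identities and inequalities proves $P\!\left( \epsilon \in (-A) \bigoplus B \right) \le$ (\ref{eq:upper_bound}), which together with the almost-sure convergence of $\hat{P}_{n_{MC}}$ finishes the proof. I expect the main difficulty to be bookkeeping rather than analysis: checking that each replacement strictly increases (or preserves) the value and confirming that $V'$ satisfies the hypotheses of the divergence theorem, since no delicate estimate appears anywhere in the argument.
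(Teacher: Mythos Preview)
Your proposal is correct and follows essentially the same approach as the paper: establish convergence of the Monte Carlo estimator to the exact collision probability, then chain the inequalities of Section~\ref{subsec:convex_pcd} to bound that probability by (\ref{eq:upper_bound}). The only cosmetic differences are that the paper shows convergence via explicit expectation and vanishing variance of the binomial estimator rather than invoking the law of large numbers, and that your justification of the per-triangle bound (\ref{eq:upper_bound})---via monotonicity of $\text{erf}$ composed with an affine map attaining its extrema at the vertices of the simplex---is more detailed than the paper's, which simply asserts the inequality.
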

\begin{proof}
The collision probability approximated by Monte Carlo methods follows a binomial distribution $B(n_{MC}, p^*)$ divided by $n_{MC}$:
\begin{align*}
\label{eq:monte_carlo}
\frac{1}{n_{MC}} I \left( \mathbf{\epsilon_i} \in ((-A) \bigoplus B) \right) \sim \frac{1}{n_{MC}} B(n_{MC}, p^* ) ,
\end{align*}
where $B(n, p)$ is a binomial distribution and $p^*$ is the exact collision probability given as (\ref{eq:exact_prob}).
The expectation of Monte Carlo approximation is
\begin{align*}
 & \mathrm{E} \left[ \frac{1}{n_{MC}} B(n_{MC}, p^* ) \right] \\
=& \frac{1}{n_{MC}} \mathrm{E} \left[ B(n_{MC}, p^* ) \right] \\
=& \frac{1}{n_{MC}} n_{MC} p^* = p^* \\
=& \iiint_{V'} \frac{1}{\sqrt{8 \pi^3}} \exp \left( -\frac{1}{2} \| \mathbf{x} \|^2 \right) d \mathbf{x} \\
\leq & \iiint_{V'} \frac{1}{\sqrt{8 \pi^3}} \exp \left( -\frac{1}{2} (\mathbf{x} \cdot \mathbf{n}_d)^2 \right) d \mathbf{x} \\
=& \sum_{i} \oiint_{\triangle S_{i1} S_{i2} S_{i3}} (\mathbf{F} \cdot \mathbf{n}_i) dS \\
\leq & \sum_i \left( \max_{j=1,2,3} \mathbf{F}(S_{ij}) \cdot \mathbf{n}_i \right) \text{Area}(\triangle S_{i1} S_{i2} S_{i3}) .
\end{align*}
The variance of Monte Carlo approximation converges to zero:
\begin{align*}
 & \mathrm{Var} \left[ \frac{1}{n_{MC}} B(n_{MC}, p^* ) \right] \\
=& \frac{1}{n_{MC}^2} \mathrm{Var} \left[ B(n_{MC}, p^* ) \right] \\
=& \frac{1}{n_{MC}^2} n_{MC} p^* (1 - p^*) \\
=& \frac{1}{n_{MC}} p^* (1 - p^*) \xrightarrow{n_{MC} \rightarrow \infty} 0 .
\end{align*}
Thus, the Monte Carlo approximation converges to the exact collision probability as the number of samples increases, and the approximation value is bounded by Equation (\ref{eq:upper_bound}).
\end{proof}

\begin{algorithm}[t]
  \caption{$p_{col}$ = ConvexPCD($A$, $B$, $\mathbf{\Sigma}$) \\: Compute the upper bound on collision probability between convex polytopes $A$ and $B$, given a covariance matrix $\mathbf{\Sigma}$.}
  \label{alg:convex}
  \begin{algorithmic}[1]
    \REQUIRE two convex shapes $A$ and $B$,
    \ENSURE Upper bound on collision probability $p_{col}$ \label{line:convex:fast_start}
    \STATE $A'$ = $(\mathbf{\Sigma})^{-\frac{1}{2}} A$
    \STATE $B'$ = $(\mathbf{\Sigma})^{-\frac{1}{2}} B$
    \STATE $\mathbf{n}_d$ = GJK($A'$, $B'$) to define $\mathbf{F}$ \label{line:convex:fast_end}
    \STATE $V'$ = MinkowskiSum($A'$, $B'$) \label{line:convex:minkowski_sum}
    \STATE $p_{col}$ = $0$
    \FORALL {$i$ of $V'$} \label{line:convex:loop_start}
      \STATE Add Equation (\ref{eq:upper_bound}) to $p_{col}$
    \ENDFOR \label{line:convex:loop_end}
    \RETURN $p_{col}$
  \end{algorithmic}
\end{algorithm}

Algorithm~\ref{alg:convex} describes how the upper bound on collision probability for two convex shapes is computed. The two input shapes are transformed so that the variance of Gaussian distribution becomes isotropic. Then, GJK algorithm is used for finding the minimum displacement vector $\mathbf{\Sigma}$ between $A'$ and $B'$. The Minkowski sum $V'$ is computed and then the upper bound on collision probability is directly computed from its all triangles. Computing the Minkowski sum (line~\ref{line:convex:minkowski_sum}) has a time complexity linearly proportional the number of output vertices. The loop (line~\ref{line:convex:loop_start}-\ref{line:convex:loop_end}) also has a time complexity linearly proportional to the number of output vertices. In the worst case, the time complexity can be $O(n^2)$, where the number of vertex in $A'$ and $B'$ is $O(n)$. However, special types of convex shapes can be used for efficient computation. k-DOPs have $k$ planes surrounding the objects, and the Minkowski sum computation takes $O(k)$ instead of $O(k^2)$, because the triangles are parallel to corresponding ones. AABB is special case of k-DOPs with $k=3$. OBB has 3 pairs of parallel rectangular faces, so their Minkowski sum can be efficiently computed. As a result, we have much simpler and faster algorithms to compute the collision probability for these widely used bounding volumes.

\subsection{Probabilistic Collision Detection for General Shapes}

In order to handle non-convex objects efficiently, we decompose them into many convex shapes and build bounding volume hierarchies (BVHs) to reduce the number of operations for every convex-convex shape pair.

\begin{figure}[ht]
  \centering
  \subfloat[][]
  {
    \includegraphics[width=0.7\linewidth]{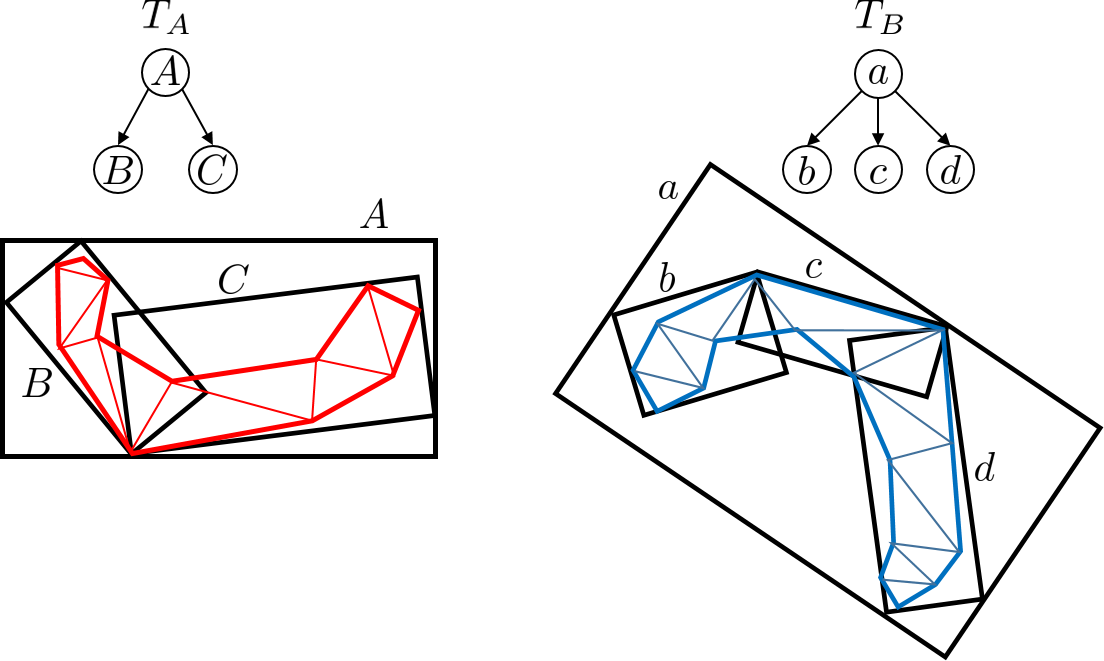}
  }
  \\
  \subfloat[][]
  {
    \includegraphics[width=0.40\linewidth]{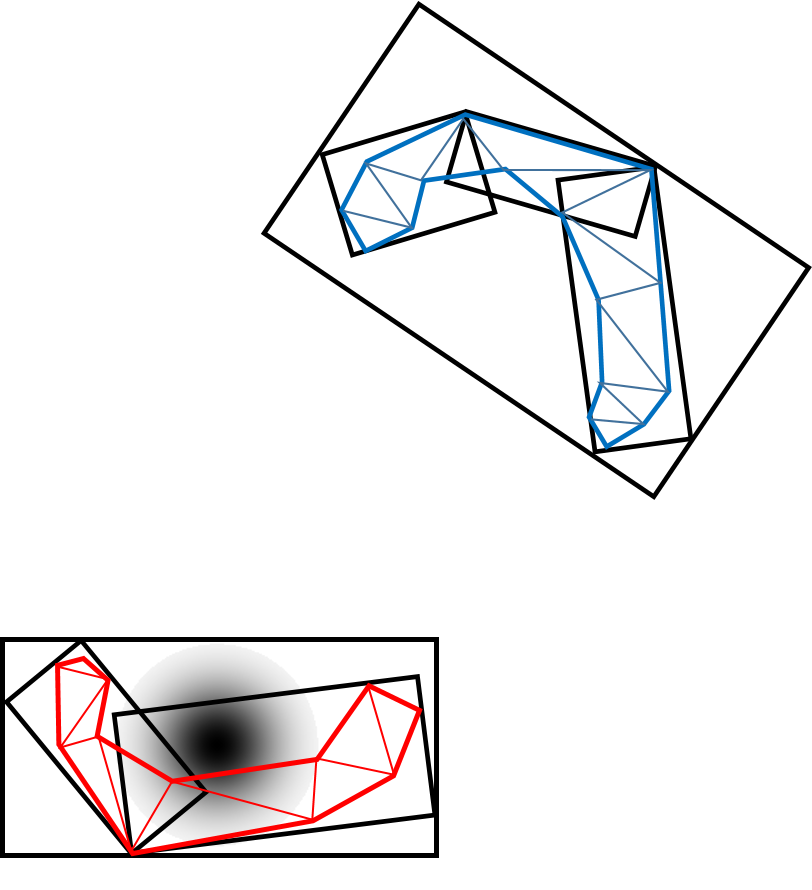}
  }
  \subfloat[][]
  {
    \includegraphics[width=0.40\linewidth]{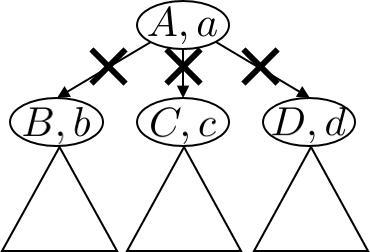}
  }
  \\
  \subfloat[][]
  {
    \includegraphics[width=0.40\linewidth]{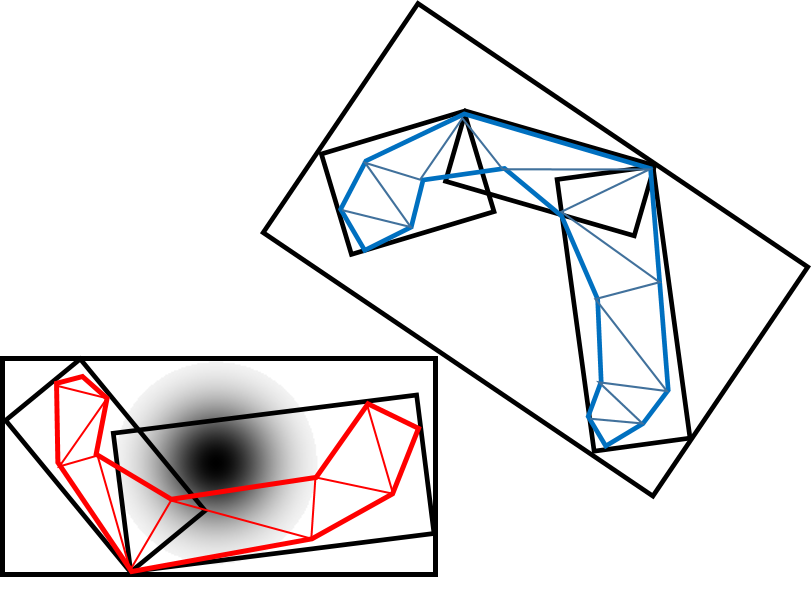}
  }
  \subfloat[][]
  {
    \includegraphics[width=0.40\linewidth]{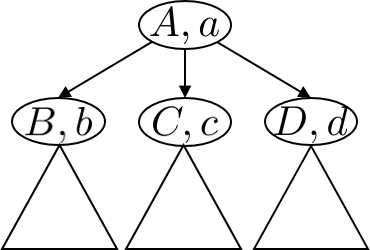}
  }
  \caption{A 2D example of BVHs and their traversal during probabilistic collision detection. The non-convex shapes are shown as red and blue polygons, and they are bounded by OBB trees in (a). The probability distribution of displacement of the red object is drawn at the center of the bounding box in (b) and (d). BVH traversal (b-c) stops or (d-e) continues, which is determined by comparing the upper bound on collision probability and the confidence level $\delta_{CD}$.}
  \label{fig:bvh_traversal}
\end{figure}

We define a confidence level $\delta_{CD} < 1$ (usually higher than $0.90$), an indication of when the traversal in BVHs stops. The confidence level means that we can confidently say that there will likely be no collision if the collision probability is less than $1 - \delta_{CD}$. To be more specific, the upper bound on the collision probability of two bounding volumes of current traversing nodes is computed as described in Section~\ref{subsec:convex_pcd}, and is compared to $1 - \delta_{CD}$. If the collision probability is less than $1 - \delta_{CD}$, it means that the probability of collision at this level of bounding volume is sufficiently low. Otherwise, traversing further to the child nodes is needed to compute a tighter bound on collision probability.

Fig.~\ref{fig:bvh_traversal} shows two cases in BVH traversal: stop and continue. In Fig.~\ref{fig:bvh_traversal}(b), the two bounding volumes are so far away that the upper bound on collision probability is already sufficiently below the threshold $1 - \delta_{CD}$. Therefore, traversal stops in the BVHs in Fig.~\ref{fig:bvh_traversal}(c), resulting in the reduction in traversing. On the other hand, in Fig.~\ref{fig:bvh_traversal}(d), the two bounding volumes are so close that they are likely to collide with each other, though the actual shapes are not so close. In this case, the traversal continues to the children for tighter bound computation.

Algorithm~\ref{alg:general} describes how the BVHs are traversed and when the traversal stops to reduce the number of computations. Two BVHs $T_A$ and $T_B$, corresponding to any type of bounding volumes (e.g., spheres, AABBs, OBBs, k-DOPs, or convex hulls), are traversed simultaneously from roots to leaf nodes. The upper bound on collision probability is computed for the convex volume of current nodes at line~\ref{line:general:convex_pair} as described in Section~\ref{subsec:convex_pcd}. The traversal stops when the collision probability is sufficiently low compared to $1 - \delta_{CD}$ or it has reached the leaf nodes (line~\ref{line:general:treversal_stop}). Otherwise, traversal continues to the child nodes of $T_A$ or $T_B$, depending on the size of two bounding volumes, in lines~\ref{line:general:traversal_start}-\ref{line:general:traversal_end}.

\begin{algorithm}[t]
  \caption{$p_{col} = $GeneralPCD($T_A$, $T_B$, $\mathbf{\Sigma}$, $\delta_{CD}$) \\: Compute the upper bound on collision probability between non-convex polytopes $A$ and $B$, given BVHs $T_A$ and $T_B$, a covariance matrix $\mathbf{\Sigma}$ and a confidence level $\delta_{CD}$.}
  \label{alg:general}
  \begin{algorithmic}[1]
    \REQUIRE two BVHs $T_A$ and $T_B$, confidence level $\delta_{CD}$,
    \ENSURE Upper bound on collision probability $p_{col}$
    \STATE $v_A$ = $T_A$.root
    \STATE $v_B$ = $T_B$.root
    \STATE $p_{root}$ = ConvexPCD($v_A$, $v_B$, $\mathbf{\Sigma}$)  \label{line:general:convex_pair}
    \IF {$p_{root} < 1 - \delta_{CD}$ or both $v_A$ and $v_B$ are leaf nodes} \label{line:general:treversal_stop}
      \RETURN $p_{col} = p_{root}$
    \ENDIF
    \IF {$v_A$ has children and $Vol(v_A) \geq Vol(v_B)$} \label{line:general:traversal_start}
      \RETURN $p_{col} = \sum\limits_{c_A : \text{child}}$GeneralPCD($c_A$, $T_B$, $\mathbf{\Sigma}$, $\delta_{CD}$)
    \ELSE
      \RETURN $p_{col} = \sum\limits_{c_B : \text{child}}$GeneralPCD($T_A$, $c_B$, $\mathbf{\Sigma}$, $\delta_{CD}$)
    \ENDIF \label{line:general:traversal_end}
  \end{algorithmic}
\end{algorithm}

\section{Results}
\label{sec:result}
\vspace*{-0.1in}

\begin{figure}[ht]
  \centering
  \subfloat[][Benchmark \#1\\(5,110 triangles)]
  {
    \includegraphics[width=0.32\linewidth]{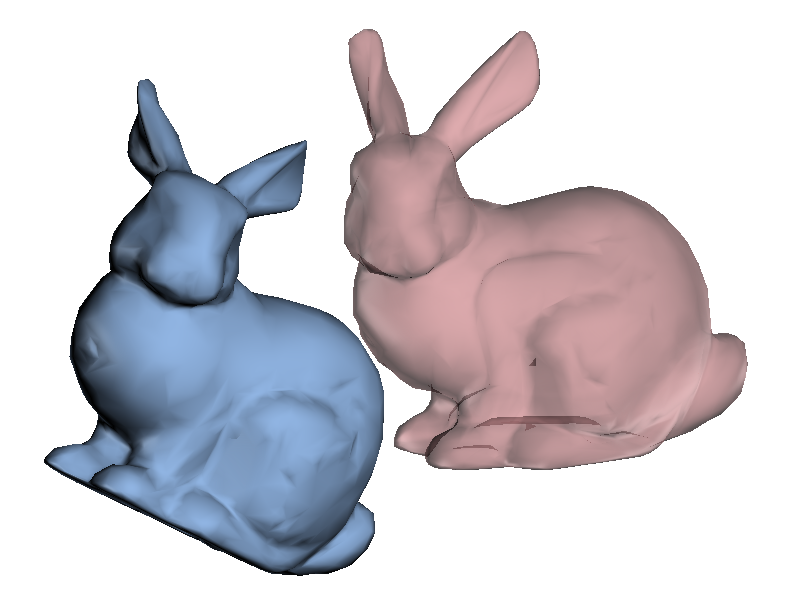}
  }
  \subfloat[][Benchmark \#2\\(10,000 triangles)]
  {
    \includegraphics[width=0.32\linewidth]{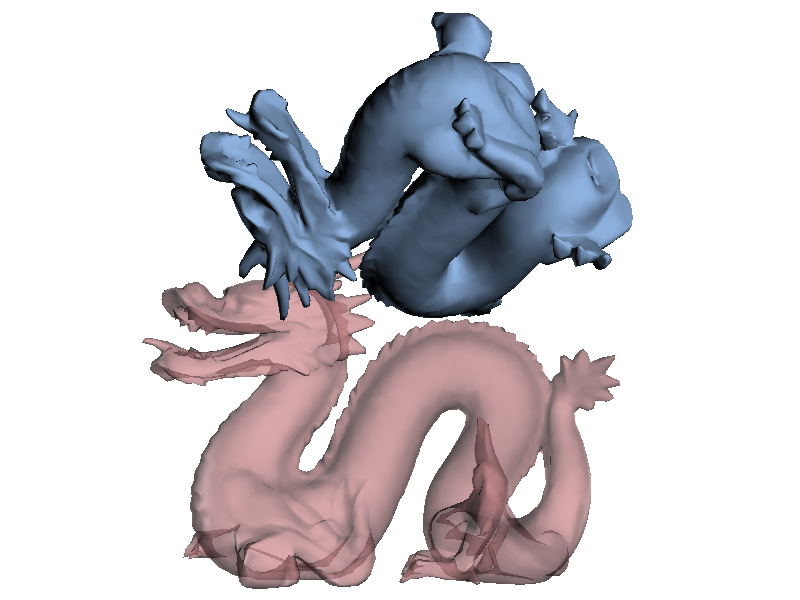}
  }
  \subfloat[][Benchmark \#3\\(10,000 triangles)]
  {
    \includegraphics[width=0.32\linewidth]{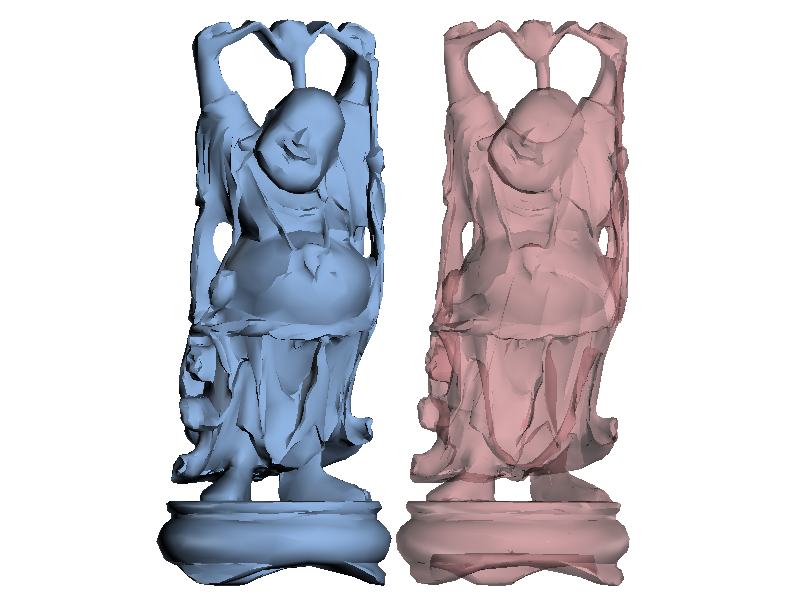}
  }
  \caption{{\bf Geometry Models and Obstacle Position Distributions.} We evaluate the performance of different algorithms with different non-convex shapes: (a) Benchmark \#1: bunny, (b) Benchmark \#2: dragon, and (c) Benchmark \#3: buddha. The red objects have Gaussian distribution errors in position.}
  \label{fig:geometries}
\end{figure}

In this section, we describe our implementation and  highlight the performance of our probabilistic collision detection algorithms on synthetic and real-world benchmarks. Furthermore, we compare the performance of different bounding volumes (i.e. spheres, AABBs, OBBs, k-DOPs, convex hull) in terms of the probability bounds and the query time. We use the following performane measurements to evaluate the performance:

\begin{itemize}
\item \textbf{Bounding volume approximation error (BVE):}. It is measured as the ratio of the volume of bounding shapes to that of the original or underlying shapes, which is always greater than or equal to 1. The closer this ratio is to 1, the better the bounding volume approximates the original shape. This ratio is low for tight fitting bounding volumes such as convex hull and high for spheres.
\item \textbf{Collision probability upper bound (CP):}. We compute the upper bound on collision probability between two nearby shapes based on Gaussian probability distribution. The actual collision probability cannot be computed analytically, so it is approximated by the Monte Carlo method. In practice, Monte Carlo methods can take a long time, but we assume that they provide the most accurate solutions. The upper bounds computed by our algorithms are expected to be higher than the Monte Carlo approximation. Our goal is to efficiently compute the tightest upper bound on collision probability with different bounding volumes for non-convex shapes.
\item \textbf{Computation time (Time):}. Each algorithm has a different time complexity, so it is important to evaluate the trade-off between the tightness of a collision probability upper bound and time complexity. All the timings reported in this paper are measured in milliseconds (ms) and generated on a single CPU core of Intel I7 processor.
\end{itemize}

The Monte Carlo method provides an approximated value of the actual collision probability by sampling $10,000$ points from the obstacle's position distribution in our benchmarks. We expect that is the most accurate available answer for collision probability computations. As a result, all the  collision probability computations computed using our approaches compute a higher value than Monte Carlo approximation, but takes significantly less time than Monte Carlo integration. For AABBs, different global coordinate systems can result in different AABB shapes. As a result, we randomly generate $10,000$ global coordinate systems from the $SO(3)$ group to generate the axes and compute the averages over these coordinates.

\subsection{Complex Non-convex Models}

We measure these values for different non-convex shapes shown in Fig.~\ref{fig:geometries}. These are very complex synthetic benchmarks in close proximity that are used to evaluate different collision detection algorithms. These models are scales so that the length of the longest bounding AABB is  $1$m. The positions are determined so that the minimum distance between two shapes is $1$ cm or $5$ cm, and we compute the collision probabilities for such configurations. The covariance matrix of position error, $\mathbf{\Sigma}$, is set with a random orientation as its main axes and use 1cm, 3cm, and 5cm as the standard deviation along the axes.

\begin{table*}[h]
\centering
\scalebox{1.0}{
\begin{tabular}{|c|c|c|c|c|c|c|c|c|c|c|c|c|c|c|c|}
\hline
\multirow{3}{*}{BVH type} & \multicolumn{3}{|c|}{\multirow{2}{*}{BVE}} & \multicolumn{6}{|c|}{distance between shapes : 1 cm} & \multicolumn{6}{|c|}{distance between shapes : 5 cm} \\ \cline{5-16}
& \multicolumn{3}{|l|}{} & \multicolumn{3}{|c|}{CP (\%)} & \multicolumn{3}{|c|}{Computation Time (ms)} & \multicolumn{3}{|c|}{CP (\%)} & \multicolumn{3}{|c|}{Computation Time (ms)} \\ \cline{2-16} 
& \#1 & \#2 & \#3 & \#1 & \#2 & \#3 & \#1 & \#2 & \#3 & \#1 & \#2 & \#3 & \#1 & \#2 & \#3 \\ \hline
\begin{tabular}[x]{@{}c@{}}Monte Carlo\\integration\end{tabular}
                        & 1.00 & 1.00 & 1.00 & 13.2 & 15.7 & 6.82 & 54,300 & 213,000 & 108,000 & 2.38 & 3.72 & 0.89 & 32,800 & 176,000 & 92,000 \\ \hline
Spheres                 & 4.80 & 6.28 & 6.50 & 45.7 & 52.8 & 34.9 &    847 & 2,480   & 2,070   & 10.3 & 18.1 & 10.3 & 261    & 1,580 & 1,030 \\ \hline
AABBs                    & 3.28 & 5.20 & 4.96 & 32.6 & 38.1 & 31.7 &    190 & 578     & 420     & 6.21 & 11.9 & 7.06 & 67     & 213   & 126   \\ \hline
OBBs                     & 1.63 & 3.16 & 2.94 & 20.3 & 31.7 & 28.8 &    179 & 580     & 441     & 4.20 & 6.23 & 2.00 & 86     & 278   & 158   \\ \hline
26-DOPs                 & 1.26 & 1.68 & 1.88 & 16.7 & 22.3 & 12.7 &    897 & 3,320   & 2,637   & 3.57 & 2.90 & 1.27 & 326    & 1,010 & 868   \\ \hline
Convex                  & 1.00 & 1.00 & 1.00 & 14.2 & 18.9 & 8.09 &  3,721 & 10,800  & 8,780   & 2.70 & 4.18 & 1.12 & 1,082  & 2,520 & 2,330 \\ \hline
\end{tabular}
}
\caption{Performance of different algorithms for complex Benchmarks \#1-3 shown in Fig. 4. We measure the bounding volume approximation error (BVE), collision probability upper bound (CP), and query computation time for different bounding volumes. The Monte Carlo integration schemes provides the most accurate result, but is very expensive. On the other hand, spheres (used in prior methods~\cite{du2011probabilistic,park2016fast}) do not provide tight bounds, as observed with CP values. In practice, OBBs seem to provide the best balance between collision probability bounds and query times.}
\label{table:performance_models}
\end{table*}

\begin{figure}[ht]
  \centering
  \subfloat[][1 cm]
  {
    \includegraphics[trim=0.8in 1in 1in 0.8in, clip=true, width=0.48\linewidth]{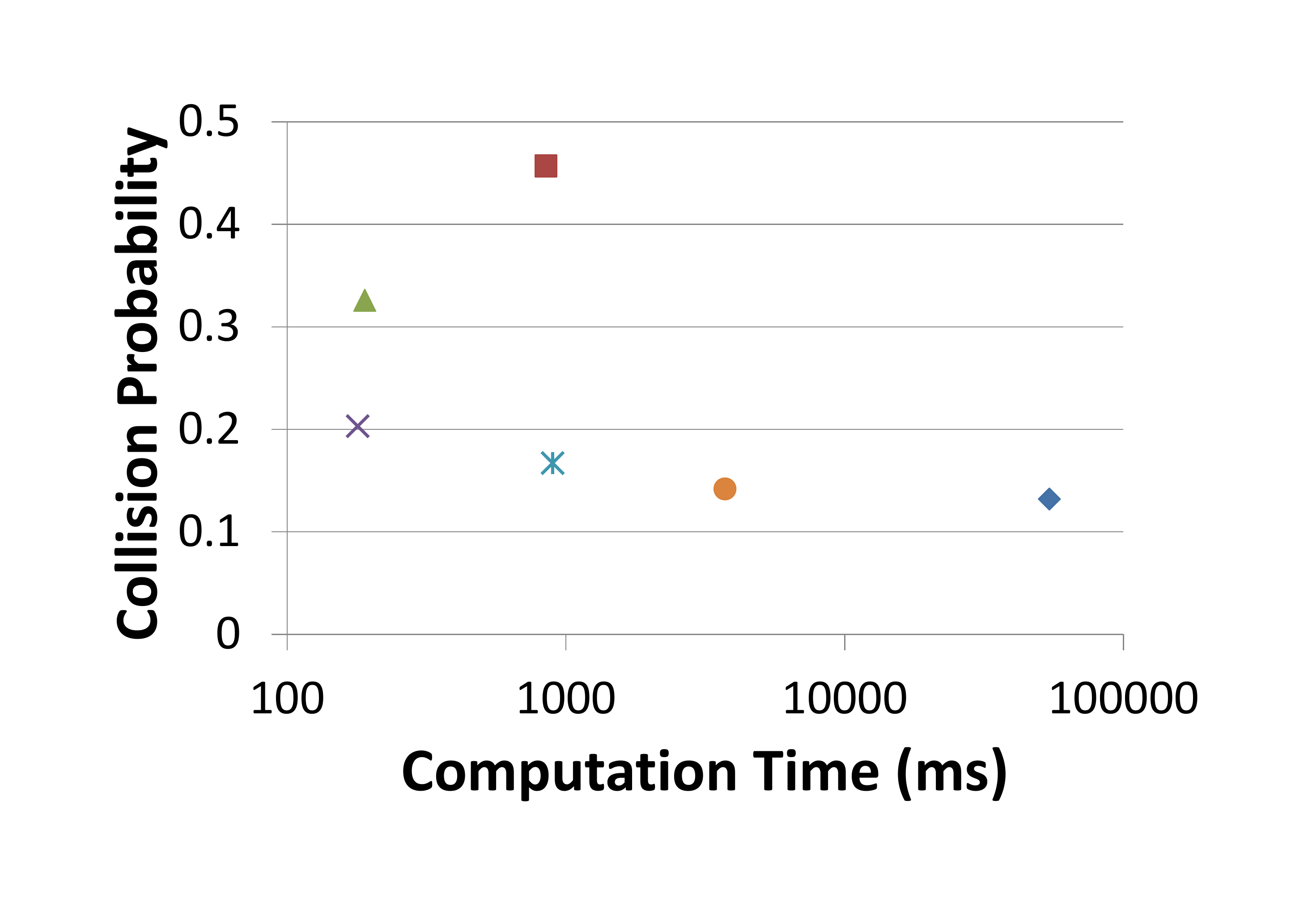}
  }
  \subfloat[][5 cm]
  {
    \includegraphics[trim=0.8in 1in 1in 0.8in, clip=true, width=0.48\linewidth]{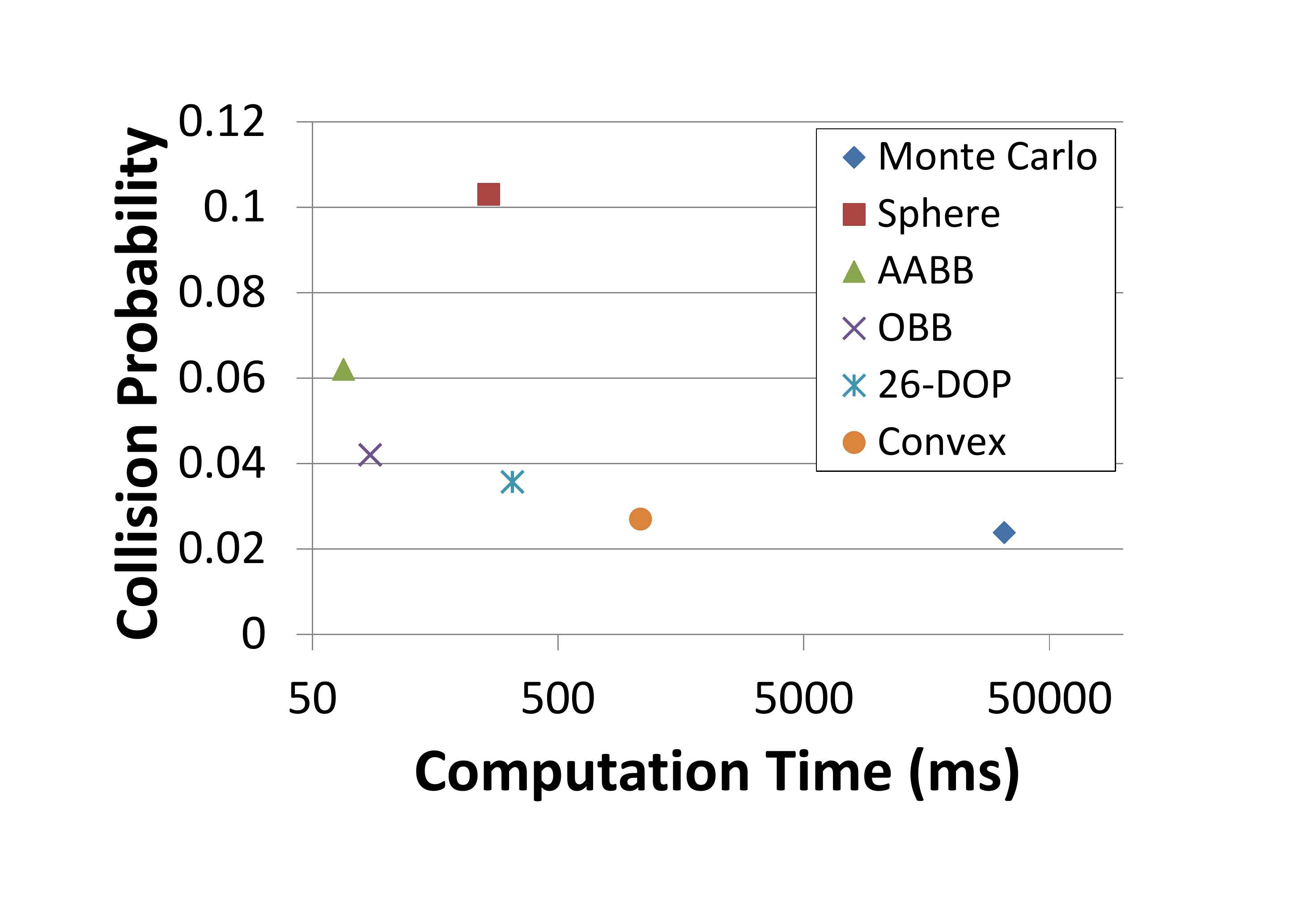}
  }
  \caption{Performance graph showing the computation time (x-axis) vs. the upper bound on collision probability (y-axis) between the two bunny models away from (a) 1 cm and (b) 5 cm.}
  \label{fig:graph_model}
\end{figure}

Table~\ref{table:performance_models} and Fig.~\ref{fig:graph_model} show the upper bound on collision probability and the query time using different bounding volumes with different approximation errors. We notice that the OBBs seem to provide the best balance between probability bound and the running time. In case of spheres~\cite{du2011probabilistic,park2016fast}), the running time is higher than AABBs or OBBs, because the culling efficiency of sphere is low because the collision probability bounds are not tight. As a result, the algorithm traverses more nodes in the hierarchy. As compared to OBBs, 26-DOPs provide tighter bounds on the collision probability. However, the query time on 26-DOPs is higher than AABBs and OBBs, because the overhead of computing the Minkowski sum.The performance of OBB is comparable to AABB, but OBBs provide a tighter bound on the probability computation.

\subsection {Robot Trajectory Planning with Sensor Errors}
We have integrated our probabilistic collision algorithms with trajectory planning algorithm and evaluated their performance on a 7-DOF Fetch arm.
In our experiments, we compute different bounding volumes for the robot and the obstacles in the scene. Furthermore, the scene consists of a dynamic human obstacles (see Fig. 1) and we assume that the robot is operating in a close proximity to the human. In this case, the robot predicts the trajectory of the human, represents that with a Gaussian distribution and uses probabilistic collision checking to compute a collision-free trajectory.  
The robot uses a  Kinect as the depth sensor, which can represent the human using with $512 \times 424$ points. We also compute the state of human obstacle model, which is represented using $60$ DOFs.
It is assumed that the obstacle shape is known in our implementation. The goal of robot trajectory planning is to generate a path where the collision probability is less than a user-specified safety level. In our benchmarks, the safety level is set to $5\%$, meaning that the collision probability between an obstacle and a robot state at any time along the trajectory should be less than $5\%$. A tighter bound computed using probabilistic collision detection algorithm increases the search space of the trajectory planning algorithm.

\begin{figure}[ht]
  \centering
  \subfloat[][]
  {
    \includegraphics[width=0.32\linewidth]{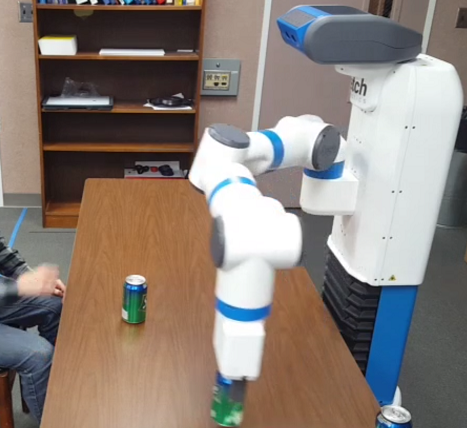}
  }
  \subfloat[][]
  {
    \includegraphics[width=0.32\linewidth]{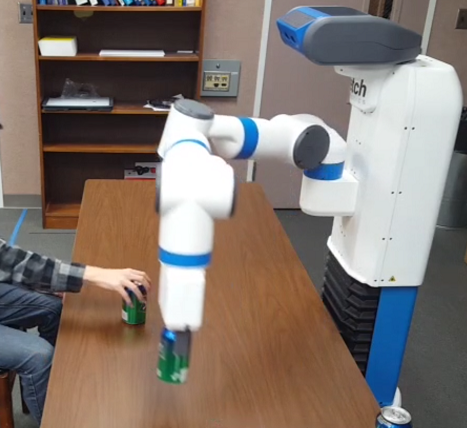}
  }
  \subfloat[][]
  {
    \includegraphics[width=0.32\linewidth]{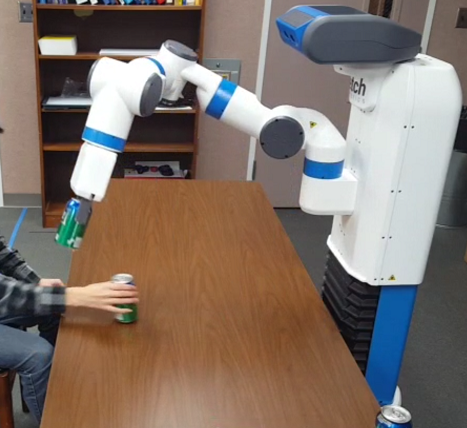}
  }
  \caption{{\bf Trajectory Planning with a 7-DOF robot:} The robot is used to move the coke cans on a table while avoiding a human user, i.e. the human arm is close to the robot. The robot trajectory is computed using a planner that uses our probabilistic collision detection.  At any time along the robot trajectory, the collision probability is less than $5\%$.}
  \label{fig:bencmark_coke_cans}
\end{figure}

\begin{table*}[h]
\centering
\scalebox{1.0}{
\begin{tabular}{|c|c|c|c|c|}
\hline
\multirow{2}{*}{BVH type} & \multicolumn{2}{|c|}{Moving coke cans} & \multicolumn{2}{|c|}{Waving an arm} \\ \cline{2-5}
& CP (\%) & Time (ms) & CP (\%) & Time (ms) \\ \hline
Monte Carlo & 1.00 & 20,800 & 1.00 & 15,200 \\ \hline
Sphere      & 3.81 &  3,260 & 4.72 &  1,730 \\ \hline
AABB        & 3.72 &    637 & 3.26 &    425 \\ \hline
OBB         & 2.50 &    819 & 2.80 &    433 \\ \hline
26-DOPs     & 2.11 &  1,280 & 2.71 &    823 \\ \hline 
Convex      & 1.59 &  8,440 & 1.39 &  4,630 \\ \hline
\end{tabular}
}
\caption{Robot motion planning scenarios using collision probability computation as a constraint in terms of trajectory planning.   The upper bound on collision probability between an obstacle and the robot trajectory, computed using different bounding volume types, should be less than $5\%$.  The Sphere BVH computes the most conservative bound. We evaluated the exact collision probability and computation time for each trajectory for different bounding volume hierarchies in this real-world scenario. In these scenarios, OBBs provide the best balance between collision bounds and the query time.}
\label{table:performance_robot}
\end{table*}

\section{Conclusions, Limitations and Future Work}

In this paper, we present fast and reliable probabilistic collision detection algorithms for general convex and non-convex shape models. This includes an efficient algorithm for convex polytopes that is based on computing the Minkowski sums pf two polytopes. We show that the probability bound computed by our approach is always an upper bound. We also simplify the computations and present optimized algorithms for simpler convex shapes such as AABBs, OBBs and convex hulls. Based on these bounding volumes, we present a hierarchical algorithm for non-convex shapes. We have evaluated their performance on complex synthetic benchmarks and also integrated them with a real-time trajectory planning algorithm. In practice, we observe that OBBs seem to present the best balance between the tightness of bounds and the query times.

Our approach has some limitation. Current formulation is designed for rigid model and the error is represented using a Gaussian probability distribution. The performance of different bounding volumes can vary based on the shape of the objects and their relative configurations.  Furthermore, we only take into account only the position error in our benchmarks, and not the orientation error.
There are many avenues for future work.  Besides overcoming these limitations, we will like to design efficient algorithms with tight bounds for articulated models. It would also be useful to derive similar algorithms for other distributions and evaluate their performance in real-world scenarios.


\bibliographystyle{IEEEtran}
\bibliography{icra17}

\end{document}